\documentclass[]{style}

\usepackage{titletoc}
\usepackage[toc,page,header]{appendix}

\usepackage{amsmath}
\usepackage{amssymb}
\usepackage{mathtools}
\usepackage{mathrsfs}

\usepackage{graphicx}
\usepackage{adjustbox}
\usepackage{float}
\usepackage{multirow}
\usepackage{multicol}
\usepackage{tabularx}
\usepackage{colortbl}
\usepackage{subcaption}
\usepackage{wrapfig}
\usepackage{placeins}

\usepackage{enumitem}
\usepackage{algorithm}
\usepackage{algpseudocode}
\usepackage{pifont}

\newcommand{\cmark}{\textcolor{green!60!black}{\ding{51}}}
\newcommand{\xmark}{\textcolor{red!70!black}{\ding{55}}}

\usepackage{cleveref}

\makeatletter
\@ifpackageloaded{ulem}{\normalem}{}
\makeatother

\newcommand{\comicfont}{\sffamily\bfseries}
\newtheorem{proposition}{Proposition}

\DeclareRobustCommand{\emailicon}{%
  \raisebox{-0.15em}{\includegraphics[height=1em]{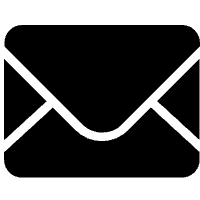}}%
}

\title{Self-OPD: On-Policy Distillation for Flow Matching Models without Teacher}

\author[1,3,*]{Shiyi Zhang}
\author[2,3,*,\boxtimes]{Mushui Liu}
\author[2]{Yunze Tong}
\author[3]{Wanggui He}
\author[3]{Siyu Zou}
\author[3]{Jinlong Liu}
\author[2]{Yunlong Yu}
\author[1]{Jian Song}
\author[3,\boxtimes]{Hao Jiang}
\author[3]{Pipei Huang}
\author[3]{Bo Zheng}

\affiliation[1]{Tsinghua University}
\affiliation[2]{Zhejiang University}
\affiliation[3]{Alibaba Group}

\contribution[]{%
  \emailicon\,\email{shiyi-zh24@mails.tsinghua.edu.cn}
  \emailicon\,\email{lms@zju.edu.cn}
}

\contribution[*]{Equal contribution}
\contribution[\boxtimes]{Corresponding author}

\abstract{
On-policy distillation (OPD), which leverages a pre-trained, specialized teacher model to provide dense supervisory signals, has achieved significant success in Large Language Models (LLMs) and has recently been adapted to flow matching models.
However, this paradigm suffers from two major issues: First, training a separate, task-specific teacher for every new objective incurs high computational costs. Second, the discrepancy between teacher and student distributions often leads to compounding errors along the generation trajectory.
In this paper, we introduce \textbf{Self-OPD}, a teacher-free OPD framework for flow matching models that turns the student's own self-exploration into step-wise supervision.
At each timestep, Self-OPD branches the deterministic next-state prediction into $K$ stochastic SDE candidates, rolls them out with the ODE sampler, and compares their rewards against a deterministic self-reference baseline to obtain normalized advantages.
The velocity field is optimized with an all-branch pull-push objective, where high-advantage branches attract the student and low-advantage branches repel it under direction-aware attenuation and SDE-variance normalization.
For multi-objective alignment, Self-OPD fuses normalized scores at the reward level, avoiding direct gradient conflict.
Experiments on single and mixed reward benchmarks show that Self-OPD outperforms prior RL and OPD methods without task-specific teachers.
}

\checkdata[
\raisebox{-0.25em}{\includegraphics[height=1.1em]{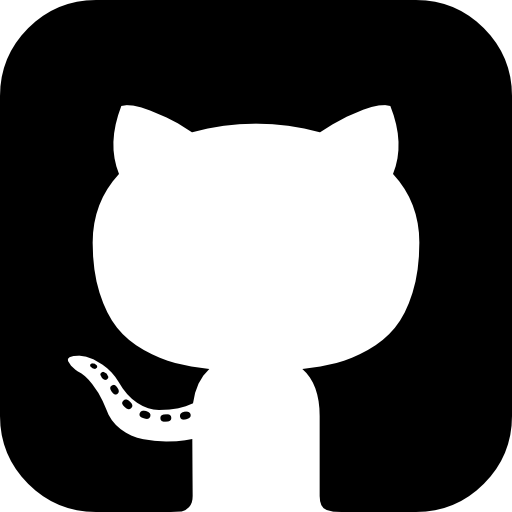}}~~Github]{\href{https://github.com/Shiy-Zhang/Self-OPD}{\texttt{https://github.com/Shiy-Zhang/Self-OPD}} 
\\[-1.5ex]}

\begin{document}
\maketitle


\section{Introduction}
Flow Matching (FM) models~\cite{flow_matching, rectified_flow, SD3}, which learn a continuous velocity field that transports noise to data, have become a dominant backbone for high-quality visual generation~\cite{DDPM, LDM, SDXL, FLUX}.
Despite their generative strength, aligning FM models with downstream objectives such as text rendering~\cite{TextDiffuser, GlyphControl}, compositional correctness, and human preference remains challenging.

\begin{figure}[!t]
  \vspace{-5mm}
  \centering
  \captionsetup{skip=2pt}
  \begin{minipage}{0.88\linewidth}
    \centering
    \begin{minipage}[c]{0.56\linewidth}
      \centering
      \resizebox{\linewidth}{!}{%
        \scriptsize
        \begin{tabular}{lccc}
          \toprule
            & Flow-GRPO & Teacher-based OPD & \textbf{Self-OPD} \\
          \midrule
          Teacher-free       & \cmark & \xmark & \cmark \\
          Per-step supv.     & \xmark & \cmark & \cmark \\
          On-policy data     & \cmark & \cmark & \cmark \\
          Credit assign.     & Terminal & Dense & Dense \\
          Adapts to student  & \cmark & \xmark & \cmark \\
          Exploration & Full traj. & Teacher & Local \\
          \bottomrule
        \end{tabular}%
      }
    \end{minipage}%
    \hfill
    \begin{minipage}[c]{0.43\linewidth}
      \centering
      \includegraphics[width=\linewidth]{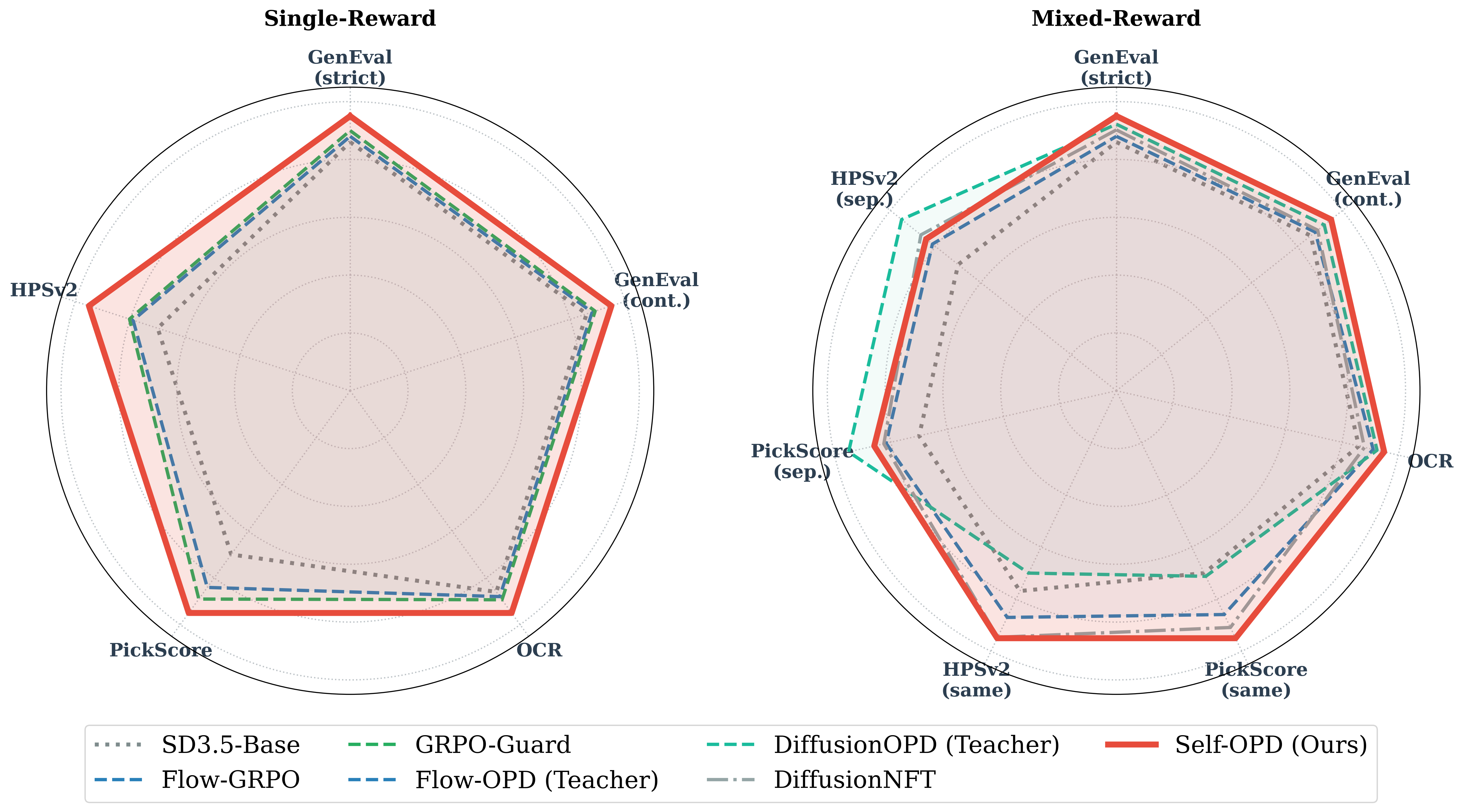}
    \end{minipage}

    \vspace{1pt}
    \begin{minipage}[t]{0.56\linewidth}
      \centering\scriptsize\comicfont (a) Comparison of alignment paradigms
    \end{minipage}%
    \hfill
    \begin{minipage}[t]{0.43\linewidth}
      \centering\scriptsize\comicfont (b) Single- and mixed-reward results
    \end{minipage}

    \vspace{0.5mm}
    \includegraphics[width=\linewidth]{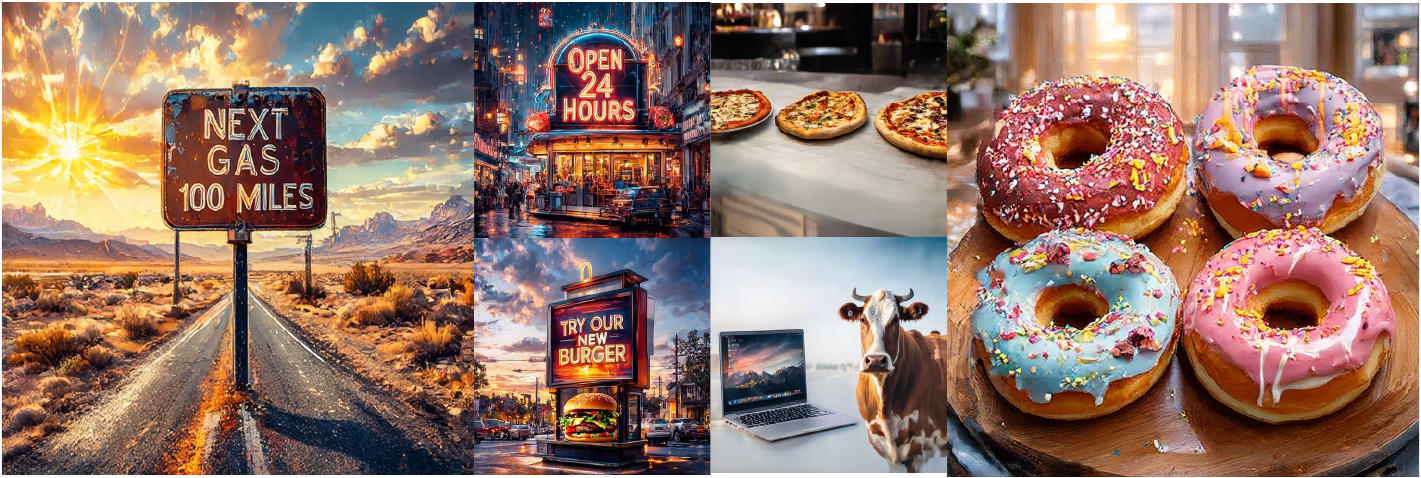}
    \par\vspace{-1pt}
    {\scriptsize\comicfont (c) Generated samples from a single Self-OPD model}
  \end{minipage}
  \caption{\textbf{Overview of Self-OPD.}}
  \label{fig:teaser}
\end{figure}

Existing reinforcement learning (RL) approaches~\cite{Flow-GRPO, DiffusionNFT, DDPO, DPOK, DiffusionDPO, GRPO-Guard} directly optimize reward signals by treating generation as a sequential decision process.
As summarized in Fig.~\ref{fig:teaser}(a) and illustrated in Fig.~\ref{fig:paradigm_comparison}, these methods can be teacher-free, but they often rely on terminal scores from full denoising trajectories.
Consequently, reward credit must be assigned backward through many steps, which leads to high-variance gradients and makes multi-objective alignment fragile when different rewards favor different visual properties.

On-policy distillation (OPD) provides a more stable alternative by supplying dense, step-wise supervision on the student's own trajectory.
Recent OPD methods for FM or diffusion models~\cite{flow_opd, diffusion_opd} regress the student velocity toward a pretrained teacher prediction at each denoising step, thereby improving training stability and sample efficiency.
Yet this teacher-dependent design introduces a different bottleneck: each new objective requires training or acquiring a specialized teacher, the student is upper-bounded by the quality and bias of that teacher, and field-level fusion of multiple teachers can create conflicting update directions~\cite{flow_opd, diffusion_opd, dance_opd}.
In particular, some OPD methods handle multiple objectives by \emph{decoupling} the problem into separately trained, task-specific teachers whose velocity fields are then routed and merged at the field level~\cite{diffusion_opd}, which is fundamentally at odds with our aim of driving a \emph{single} image to high reward across all objectives at once.
This motivates a central question: \textit{\textbf{Can we retain the dense per-step supervision of OPD while removing the need for any external teacher?}}

We answer this question with \textbf{Self-OPD}, a teacher-free on-policy distillation framework for flow matching models.
The key idea is to replace teacher-provided velocity targets with reward-weighted targets discovered from the student's own local exploration.
At each on-policy timestep $t_j$, Self-OPD branches the deterministic next-state prediction $x_{t_{j+1},\theta}$ into $K$ stochastic SDE candidates, rolls each candidate out deterministically to a clean image, and evaluates the resulting images with task rewards.
In parallel, a deterministic ODE rollout from the same parent state provides a self-reference baseline, allowing terminal rewards to be converted into normalized branch advantages.
Self-OPD then performs all-branch pull-push distillation: high-advantage branches pull the velocity field toward better local trajectories, while low-advantage branches push it away from poor directions.
A direction-aware attenuation term stabilizes this signed objective by suppressing repulsion that would counteract the best branch, and an SDE-variance normalization yields a principled per-step alignment loss connected to reward-tilted KL minimization.
This self-referenced formulation also changes how multi-objective alignment is handled.
Instead of differentiating through a weighted sum of potentially conflicting objective losses, Self-OPD fuses normalized scalar scores at the reward level and uses the composite reward only to rank sampled branches.
The actual regression target remains a concrete trajectory velocity sampled from the student's own neighborhood, which naturally supports black-box rewards and avoids direct gradient competition among objectives.
Rather than partitioning capabilities across specialized teachers as in field-level decoupling, Self-OPD selects branches inside the \emph{joint} high-reward region, so that a single image is directly optimized to score high on all rewards at once.
As shown in Fig.~\ref{fig:teaser}, this lets a \emph{single} generated image simultaneously satisfy multiple objectives---accurate text rendering, correct composition, and high aesthetic quality---rather than having different images each excel under a different reward; we return to this joint high-reward behavior in Sec.~\ref{sec:fusion}.

Overall, our contributions are:
\begin{itemize}
    \item We propose \textbf{Self-OPD}, a teacher-free on-policy distillation framework for flow matching models that converts reward-guided self-exploration and a self-reference baseline into dense step-wise supervision.
    \item We introduce an all-branch pull-push distillation objective with SDE-variance normalization and direction-aware attenuation, using both high- and low-advantage branches while maintaining stable optimization.
    \item We develop a reward-level fusion strategy for multi-objective alignment, enabling black-box composition without field-level teacher routing or gradient conflicts.
    \item We show empirically that Self-OPD outperforms prior RL and teacher-based OPD methods across single-reward and mixed-reward benchmarks.
\end{itemize}

\begin{figure*}[t]
\centering
\includegraphics[width=\linewidth]{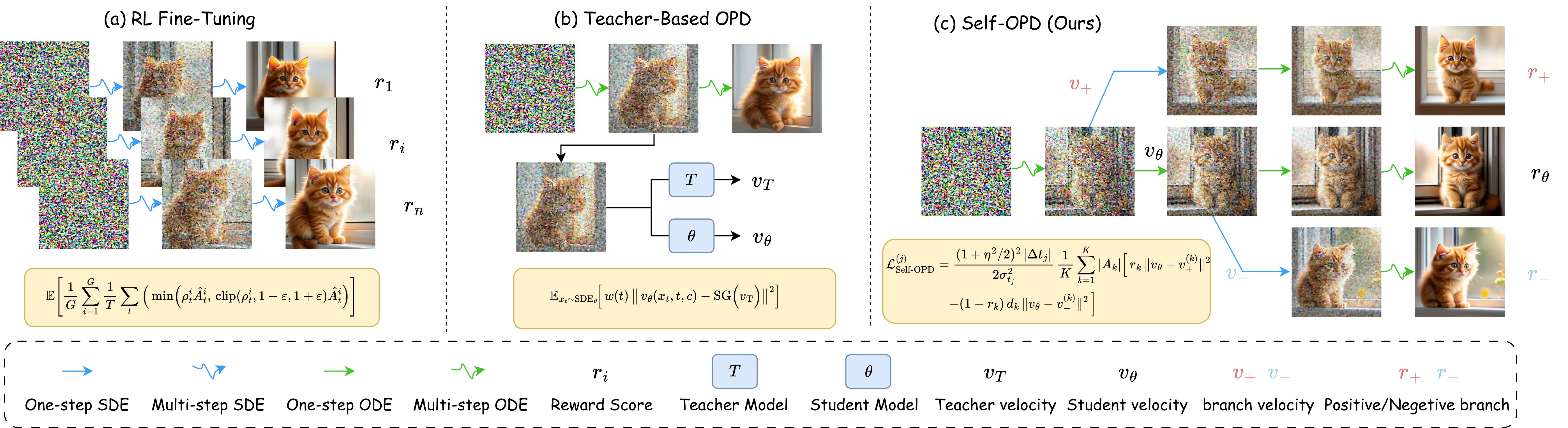}
\caption{\textbf{Comparison of alignment paradigms.}  (a)~\textbf{Flow-GRPO} relies on trajectory-level policy gradients from terminal rewards. (b)~\textbf{Flow-OPD} requires a pretrained teacher for step-wise MSE supervision. (c)~\textbf{Self-OPD (Ours)} turns local stochastic branches and a self-reference baseline into dense pull-push supervision, without requiring any teacher.}
\label{fig:paradigm_comparison}
\end{figure*}

\section{Related Work}
\label{sec:related-work}
\textbf{RL in Diffusion Models.}
RL has recently emerged as an effective paradigm for aligning diffusion-based visual generation models~\cite{LDM, SDXL, SD3, FLUX, liu2024llm4gen, TFCustom} with human preferences~\cite{Flow-GRPO, DiffusionNFT, AWM, DDPO, DPOK, DiffusionDPO, DRaFT, ImageReward,  ImageReward}. From the perspective of backward estimation, FlowGRPO~\cite{Flow-GRPO}, TempFlow-GRPO~\cite{TempFlow-GRPO}, TP-GRPO~\cite{TP-GRPO}, and DanceGRPO~\cite{DanceGRPO} reformulate denoising as a sequential decision-making problem, enabling policy gradient optimization over discrete diffusion timesteps. DiffusionNFT~\cite{DiffusionNFT} instead operates on the forward diffusion trajectory, treating the noising process as an environment to derive reward signals for policy updates. More recently, several works~\cite{GRPO-Guard,DiffusionNFT} have explored multi-reward optimization strategies to simultaneously improve generation quality across diverse criteria while preserving generalization, establishing a multi-task RL learning framework for visual generation.

\textbf{On-Policy Distillation.}
On-policy distillation (OPD) ~\cite{gkd2024, flow_opd, diffusion_opd, visionopd, collectionlora} has demonstrated notable progress in large language models (LLMs), owing to its ability to mitigate exposure bias and improve training efficiency through student-generated sequences~\cite{gkd2024,restem2023}.
GKD~\cite{gkd2024} established the canonical OPD framework by training the student model on its own generated outputs, effectively bridging the distribution gap between teacher and student. Building upon this, Flow-OPD~\cite{flow_opd} and Diffusion-OPD~\cite{diffusion_opd} extend OPD~\cite{Poly-OPD,Any-OPD} to flow matching and diffusion models, respectively, achieving competitive generation quality with significantly reduced sampling steps~\cite{DMD, DMD2}. D-OPSD~\cite{d-opsd} further leverages multi-modal features to distill rich cross-modal knowledge into a compact single-stream textual representation, enhancing semantic alignment in text-to-image generation.
In contrast, our work extends OPD to a multi-task setting, enabling simultaneous optimization across diverse generation objectives without relying on a fixed teacher model.

\section{Methodology}

\begin{figure*}[!t]
    \centering
    \includegraphics[width=1.0\linewidth]{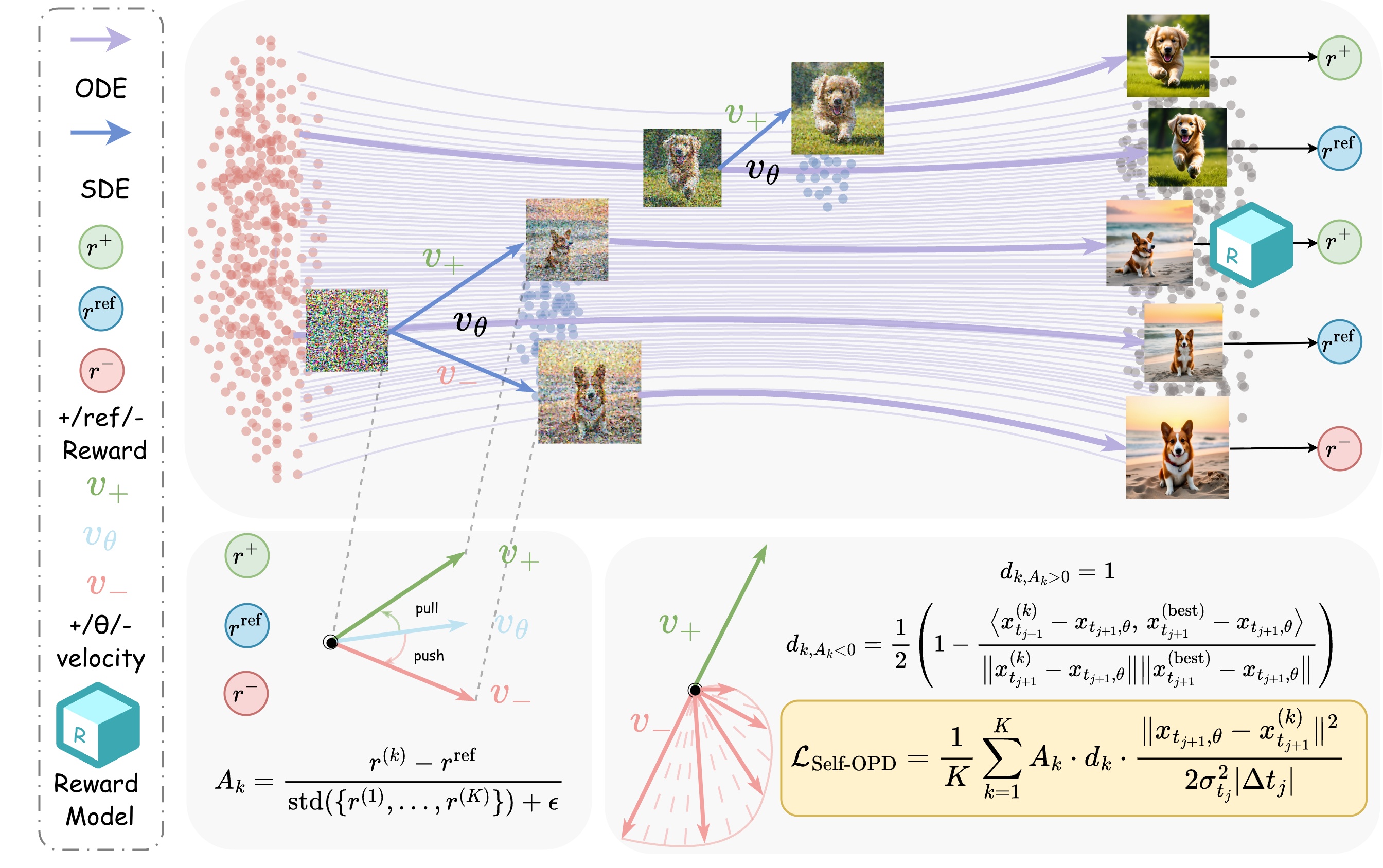}
    
    \caption{\textbf{Self-OPD pipeline.} 
  \textit{Top:} The student branches its prediction into $K$ SDE paths (blue), generates images via ODE rollouts (purple), and scores them ($r^+, r^{\mathrm{ref}}, r^-$). 
  \textit{Bottom-left:} Rewards define a self-referenced advantage $A_k$ that pulls toward high-reward velocities $v_+$ and pushes away from low-reward velocities $v_-$. 
  \textit{Bottom-right:} A direction-aware coefficient $d_k$ gates the push to prevent it from counteracting the pull, resulting in the multi-branch loss $\mathcal{L}_{\text{Self-OPD}}$.}
    \label{fig:architecture}
\end{figure*}

\subsection{Preliminaries}
\label{sec:prelim}

\textbf{Flow Matching and SDE Formulation.} 
Flow Matching (FM)~\cite{flow_matching, rectified_flow} learns a continuous velocity field $v_\theta$ that transports a noise distribution $p_1 = \mathcal{N}(0, \mathbf{I})$ to the data distribution $p_0$ along straight-line trajectories. Formally, the probability path is defined as $x_t = (1-t)x_0 + t\epsilon$ for $t \in [0, 1]$, where $\epsilon \sim \mathcal{N}(0, \mathbf{I})$. The model parameterizes $v_\theta(x_t, t)$ by minimizing the flow matching objective:
\begin{equation}
    \mathcal{L}_{\mathrm{FM}} = \mathbb{E}_{t, x_0, \epsilon}\left[\|v_\theta(x_t, t) - (\epsilon - x_0)\|^2\right].
    \label{eq:flow_matching}
\end{equation}
During inference, samples are generated by integrating the reverse-time ordinary differential equation (ODE) $\frac{\mathrm{d}x_t}{\mathrm{d}t} = v_\theta(x_t, t)$ backward from $t_0 = 1$ to $t_S \approx 0$. Specifically, we discretize the trajectory into $S$ integration steps along a discrete schedule $\{t_j\}_{j=0}^{S}$ satisfying $1 = t_0 > t_1 > \dots > t_S \approx 0$. The infinitesimal transition step is denoted as $\Delta t_j = t_{j+1} - t_j < 0$ for $j \in \{0, \dots, S-1\}$.

To enable stochastic exploration, the reverse ODE can be augmented into a reverse-time stochastic differential equation (SDE)~\cite{score_sde, flow_opd}. Under the Euler-Maruyama discretization scheme, each transition step decomposes into a deterministic next-state prediction and an isotropic stochastic perturbation:
\begin{equation}
    x_{t_{j+1}} = x_{t_{j+1},\theta} + \sigma_{t_j} \sqrt{|\Delta t_j|}\, z_j, \quad z_j \sim \mathcal{N}(0, \mathbf{I}),
    \label{eq:sde_step}
\end{equation}
where $\sigma_{t_j} = \eta\sqrt{t_j/(1{-}t_j)}$ represents the noise schedule parameterized by the coefficient $\eta \in [0, 1]$. The drift of this reverse SDE is governed by the score function $\nabla_{x_t}\log p_t(x_t)$, which is inherently constrained by the velocity field. Conditioned on a data point $x_0$, the conditional distribution is $x_t \mid x_0 \sim \mathcal{N}\bigl((1{-}t)x_0,\, t^2 \mathbf{I}\bigr)$, which yields the conditional score $\nabla_{x_t}\log p_t(x_t\mid x_0) = -\frac{x_t-(1{-}t)x_0}{t^2} = -\frac{\epsilon}{t}$. Averaging over the posterior $p(x_0\mid x_t)$ establishes the marginal-score identity $\nabla_{x_t}\log p_t(x_t) = -\frac{\mathbb{E}[\epsilon\mid x_t]}{t}$. By eliminating $x_0 = \frac{x_t - t\epsilon}{1-t}$ from the parameterized velocity target $v_\theta \approx \mathbb{E}[\epsilon - x_0 \mid x_t]$, we obtain the conditional expectation $\mathbb{E}[\epsilon\mid x_t] = x_t + (1{-}t)v_\theta$, which simplifies the score function to:
\begin{equation}
    \nabla_{x_t}\log p_t(x_t) \approx -\frac{x_t + (1{-}t)\,v_\theta(x_t, t)}{t}.
    \label{eq:score}
\end{equation}
The reverse SDE transition follows the score-augmented drift $v_\theta - \frac{\sigma_{t_j}^2}{2}\nabla_{x_t}\log p_t$~\cite{score_sde}. Substituting Eq.~\ref{eq:score} into this drift formulation scales the correction term to $\frac{\sigma_{t_j}^2}{2t_j}\bigl(x_{t_j}+(1{-}t_j)v_\theta\bigr)$. Consequently, the deterministic next-state prediction $x_{t_{j+1},\theta}$ is formulated as:
\begin{equation}
    x_{t_{j+1},\theta} = x_{t_j} + \left[ v_\theta(x_{t_j}, t_j) + \frac{\sigma_{t_j}^2}{2t_j} \Bigl( x_{t_j} + (1{-}t_j)\,v_\theta(x_{t_j}, t_j) \Bigr) \right] \Delta t_j.
    \label{eq:mu_theta}
\end{equation}
Setting $\eta = 0$ naturally recovers the deterministic Euler ODE step $x_{t_j} + v_\theta \Delta t_j$. To explicitly reveal the dependence of the prediction on the learned velocity, we collect the $v_\theta$ terms in Eq.~\ref{eq:mu_theta}, yielding the coefficient $\Delta t_j\bigl[1+\frac{\sigma_{t_j}^2}{2t_j}(1{-}t_j)\bigr]$. Substituting the schedule $\sigma_{t_j}^2=\eta^2 t_j/(1{-}t_j)$ cancels the terms $t_j$ and $(1{-}t_j)$, leaving a timestep-independent constant $\bigl(1+\frac{\eta^2}{2}\bigr)$. Hence, the deterministic prediction displays an elegant affine relationship with the velocity field:
\begin{equation}
    x_{t_{j+1},\theta} = b_{t_j}(x_{t_j}) + c_{t_j}\, v_\theta(x_{t_j}, t_j), \qquad
    c_{t_j} = \bigl(1+\tfrac{\eta^2}{2}\bigr)\Delta t_j,
    \label{eq:affine_mean}
\end{equation}
where $b_{t_j}(x_{t_j})=\bigl(1+\frac{\sigma_{t_j}^2\Delta t_j}{2t_j}\bigr)x_{t_j}$ aggregates all components independent of $v_\theta$. This affine equivalence guarantees that any distillation objective formulated in transition space can be minimized with identical convergence properties in velocity space.

In this work, we propose \textbf{Self-OPD}, a teacher-free on-policy distillation framework that transforms self-generated rewards into dense, step-wise supervision. As illustrated in Fig.~\ref{fig:architecture}, Self-OPD first explores local trajectories around the student's on-policy path via stochastic SDE branching, evaluates these candidates against a deterministic self-reference baseline, and finally distills the collective feedback through an advantage-weighted pull-push objective.

\subsection{Self-Referenced Evaluation and Self-Exploration}
To retain dense supervision without a pretrained teacher, Self-OPD lets the student evaluate its own local alternatives. Given an on-policy latent $x_{t_j}$, the method first samples $K$ stochastic next states around the deterministic prediction and then compares their final rewards against the student's default ODE trajectory from the same state. This local, self-referenced comparison turns terminal rewards into low-variance per-step guidance.

\textbf{SDE Branching Exploration.}
At timestep $t_j$, we compute the deterministic next-state prediction $x_{t_{j+1},\theta}$ with a single forward pass through the student transformer. Following the SDE transition in Eq.~\ref{eq:sde_step}, we instantiate $K$ candidate branches by drawing independent Gaussian perturbations:
\begin{equation}
    x_{t_{j+1}}^{(k)} = x_{t_{j+1},\theta} + \sigma_{t_j}\sqrt{|\Delta t_j|}\, z_k,\quad
    z_k \sim \mathcal{N}(0, I)
    \label{eq:branching}
\end{equation}
where $\quad k = 1,\ldots,K$. All branches share the same base prediction $x_{t_{j+1},\theta}$, so the expensive transformer evaluation is performed only once; the additional diversity comes from lightweight SDE perturbations. The noise level $\sigma_{t_j}$, set by $\eta = 0.6$ in our experiments, controls the exploration radius.

\textbf{ODE Rollout and Self-Reference Evaluation.}
Each branch $x_{t_{j+1}}^{(k)}$ is completed to a clean latent $\hat{x}_0^{(k)}$ by a deterministic ODE rollout ($\eta{=}0$), decoded to pixel space, and scored by a task-specific reward model $R$:
\begin{equation}
    r^{(k)} = R\left(\mathrm{Dec}(\hat{x}_0^{(k)}),\, c\right), \quad k = 1, \ldots, K,
    \label{eq:scoring}
\end{equation}
where $c$ is the prompt and $\mathrm{Dec}(\cdot)$ is the VAE decoder. To obtain a variance-reducing baseline without a teacher, we also run a fully deterministic ODE trajectory directly from the parent state $x_{t_j}$ and denote its reward as $r^{\mathrm{ode}}$. The branch advantage is then normalized relative to this self-reference:
\begin{equation}
    A_k = \frac{r^{(k)} - r^{\mathrm{ode}}}{\mathrm{std}(\{r^{(1)}, \ldots, r^{(K)}\}) + \epsilon},
    \label{eq:advantage}
\end{equation}
where positive advantages indicate local directions that outperform the student's default trajectory, while negative advantages identify directions to avoid. For multi-reward training, $r^{(k)}$ is the composite reward defined in Eq.~\ref{eq:composite_reward}; it is still used only as a non-differentiable branch-ranking signal inside Eq.~\ref{eq:advantage}.

\subsection{All-Branch Pull-Push Distillation}
\label{sec:distillation}
Instead of regressing only to the best branch, Self-OPD uses the full exploration neighborhood. Positive-advantage branches should pull the deterministic prediction toward locally better trajectories, while negative-advantage branches should push it away from poor directions. This all-branch update reduces target switching and preserves the information contained in branches that Best-of-$K$ selection would discard.

Not every negative branch is equally useful. If a low-reward branch points in nearly the same direction as the best branch, a strong repulsive update would counteract the desirable pull. We therefore introduce a direction-aware attenuation coefficient:
\begin{equation}
    d_k = \begin{cases}
        1 & \text{if } A_k \geq 0, \\[6pt]
        \dfrac{1}{2} \left( 1 - \dfrac{\langle \delta_k, \delta_{\mathrm{best}} \rangle}{\| \delta_k \| \| \delta_{\mathrm{best}} \|} \right) & \text{if } A_k < 0,
    \end{cases}
    \label{eq:direction}
\end{equation}
where $\delta_k = x_{t_{j+1}}^{(k)} - x_{t_{j+1},\theta}$ and $\delta_{\mathrm{best}} = x_{t_{j+1}}^{(\mathrm{best})} - x_{t_{j+1},\theta}$ are measured from the deterministic prediction. The coefficient is bounded in $[0,1]$: it keeps the full repulsion for negative branches that point opposite to the best direction, and suppresses repulsion when a negative branch is aligned with the best branch.

Because $x_{t_{j+1},\theta}$ is affine in $v_\theta$ by Eq.~\ref{eq:affine_mean}, every branch can be associated with an effective branch velocity $v^{(k)}$ satisfying $x_{t_{j+1}}^{(k)} = b_{t_j}(x_{t_j}) + c_{t_j}v^{(k)}$. Subtracting Eq.~\ref{eq:affine_mean} from the branch definition Eq.~\ref{eq:branching} and dividing by $c_{t_j}=(1+\eta^2/2)\Delta t_j$ gives the closed form
\begin{equation}
    v^{(k)} = v_\theta - \frac{\sigma_{t_j}}{(1+\eta^2/2)\sqrt{|\Delta t_j|}}\, z_k,
    \label{eq:branch_velocity}
\end{equation}
so injecting Gaussian noise in transition space is exactly a perturbation of the velocity target. Consequently the displacement factors as $x_{t_{j+1},\theta}-x_{t_{j+1}}^{(k)} = c_{t_j}\,(v_\theta - v^{(k)})$, so the squared transition distance equals the squared velocity distance scaled by $c_{t_j}^2=(1+\eta^2/2)^2|\Delta t_j|^2$. Dividing this by the transition-kernel variance normalization $2\sigma_{t_j}^2|\Delta t_j|$ turns the constant into the prefactor $(1+\eta^2/2)^2|\Delta t_j|/(2\sigma_{t_j}^2)$ below. Let $r_k=\mathbf{1}[A_k\geq 0]$, and define $v_+^{(k)}$ and $v_-^{(k)}$ as the effective velocities of positive and negative branches, respectively. The step-wise Self-OPD objective is:
\begin{equation}
\begin{aligned}
    \mathcal{L}_{\text{Self-OPD}}^{(j)} = \frac{(1+\eta^2/2)^2\,|\Delta t_j|}{2\sigma_{t_j}^2}\, \frac{1}{K}\sum_{k=1}^{K} |A_k|\Big[\, r_k\,\|v_\theta - v_+^{(k)}\|^2 
    - (1-r_k)\,d_k\,\|v_\theta - v_-^{(k)}\|^2 \,\Big],
\end{aligned}
\label{eq:loss_unified}
\end{equation}
The prefactor follows from the SDE transition variance and the affine change of variables from transition mean to velocity, matching the normalization used in OPD-style per-step regression~\cite{flow_opd,diffusion_opd}. The first term pulls $v_\theta$ toward high-reward branch velocities, while the second term pushes it away from low-reward velocities after direction-aware attenuation.

\textbf{Connection to Per-Step KL.}
The normalization above also admits a KL interpretation. Let $q_\theta(\cdot \mid x_{t_j})=\mathcal{N}(x_{t_{j+1},\theta}, \sigma_{t_j}^2|\Delta t_j|I)$ denote the SDE transition kernel, and let $q^*(\cdot \mid x_{t_j}) \propto q_\theta(\cdot \mid x_{t_j})\exp(A(\cdot)/\tau)$ be a reward-tilted target distribution, as in reward-weighted regression.
\begin{proposition}[Per-step KL gradient]
\label{prop:kl}
For fixed $x_{t_j}$ and fixed target $q^*$, the gradient of the reverse KL with respect to the student transition mean is
\begin{equation}
    \nabla_{x_{t_{j+1},\theta}} D_{\mathrm{KL}}(q^* \| q_\theta)
    = \frac{1}{\sigma_{t_j}^2|\Delta t_j|}
    \mathbb{E}_{x_{t_{j+1}}\sim q^*}\!\left[x_{t_{j+1},\theta}-x_{t_{j+1}}\right].
    \label{eq:kl_gradient}
\end{equation}
\end{proposition}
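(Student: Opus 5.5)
The plan is to treat $q^*$ as a fixed distribution that does not depend on the variable of differentiation. Then only the cross-entropy part of the KL depends on the mean. Write $\mu := x_{t_{j+1},\theta}$ and $s^2 := \sigma_{t_j}^2|\Delta t_j|$, and work in dimension $d$. The decomposition is
\[
D_{\mathrm{KL}}(q^*\|q_\theta) = \mathbb{E}_{q^*}[\log q^*] - \mathbb{E}_{x\sim q^*}[\log q_\theta(x\mid x_{t_j})].
\]
Under the fixed-target hypothesis, the first term is constant in $\mu$. The only substantive point is to state clearly that "fixed target $q^*$" means we freeze $q^*$ (stop-gradient) even though it was defined as a tilt of $q_\theta$. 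Without this, the derivative would pick up extra terms from the normalizing constant and from the dependence of $q^*$ on $\mu$; this is the one conceptual step to flag. It matches the reward-weighted-regression convention cited just before the proposition.

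Next, write out the Gaussian log-density of the kernel from Eq.~\ref{eq:sde_step}:
\[
\log q_\theta(x\mid x_{t_j}) = -\frac{\|x-\mu\|^2}{2s^2} - \frac{d}{2}\log(2\pi s^2).
\]
The variance $s^2$ is independent of $v_\theta$, and hence of $\mu$; this follows from Eq.~\ref{eq:affine_mean}, where only the mean is affine in $v_\theta$. So the normalizer drops out as well, leaving
\[
D_{\mathrm{KL}} = \frac{1}{2s^2}\mathbb{E}_{q^*}\|x-\mu\|^2 + \text{const}.
\]

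Now differentiate under the expectation. This is justified whenever $q^*$ has a finite second moment, which holds, for example, when the exponential tilt keeps $q^*$ integrable against quadratics. It is automatic if $A$ is bounded or $q^*$ is supported on the $K$ sampled branches. The derivative is $\nabla_\mu \frac{1}{2s^2}\|x-\mu\|^2 = \frac{1}{s^2}(\mu - x)$, and taking the expectation gives Eq.~\ref{eq:kl_gradient}.

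As a remark connecting to Eq.~\ref{eq:loss_unified}, I would compose with the affine map of Eq.~\ref{eq:affine_mean}. By the chain rule, $\nabla_{v_\theta} = c_{t_j}\nabla_\mu$. Substituting $\mu - x^{(k)} = c_{t_j}(v_\theta - v^{(k)})$ from Eq.~\ref{eq:branch_velocity} recovers the prefactor $c_{t_j}^2/(2s^2) = (1+\eta^2/2)^2|\Delta t_j|/(2\sigma_{t_j}^2)$. This last step is not needed for the proposition itself.
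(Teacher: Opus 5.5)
Your proposal is correct and follows essentially the same route as the paper: you split the KL into the $\mu$-independent term $\mathbb{E}_{q^*}[\log q^*]$ and the cross-entropy, then differentiate the Gaussian quadratic form to get $\Sigma^{-1}\mathbb{E}_{q^*}[\mu-x]$. Your explicit stop-gradient caveat and your moment condition for differentiating under the expectation are small additions in rigor, which the paper leaves implicit in the phrases ``fixed target'' and ``linearity of expectation.''
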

Self-OPD approximates this expectation with the $K$ sampled branches, using $A_kd_k$ as the signed importance weight of each branch; we prove the Proposition next and then use it to explain the loss.

\textbf{Proof of Prop.~\ref{prop:kl}.}
Abbreviate the transition mean $\mu_\theta = x_{t_{j+1},\theta}$ and the isotropic covariance $\Sigma = \sigma_{t_j}^2|\Delta t_j|\,I$. Since $q^*$ is the fixed regression target, differentiating the reverse KL with respect to $\mu_\theta$ acts only through the $\log q_\theta$ term. Splitting the two terms,
\begin{equation}
    D_{\mathrm{KL}}(q^*\|q_\theta) = \mathbb{E}_{x\sim q^*}[\log q^*(x)] - \mathbb{E}_{x\sim q^*}[\log q_\theta(x)],
\end{equation}
the first is independent of $\mu_\theta$ and vanishes under $\nabla_{\mu_\theta}$. For the Gaussian kernel,
\begin{equation}
    \log q_\theta(x) = -\tfrac{1}{2}(x-\mu_\theta)^{\!\top}\Sigma^{-1}(x-\mu_\theta) - \tfrac{1}{2}\log\!\big((2\pi)^{d}|\Sigma|\big),
\end{equation}
only the quadratic form depends on $\mu_\theta$, so $\nabla_{\mu_\theta}\log q_\theta(x)=\Sigma^{-1}(x-\mu_\theta)$. By linearity of expectation,
\begin{equation}
    \nabla_{\mu_\theta}D_{\mathrm{KL}}(q^*\|q_\theta)
    = -\,\mathbb{E}_{x\sim q^*}\!\big[\Sigma^{-1}(x-\mu_\theta)\big]
    = \Sigma^{-1}\,\mathbb{E}_{x\sim q^*}\!\big[\mu_\theta-x\big],
\end{equation}
and substituting $\Sigma^{-1}=\tfrac{1}{\sigma_{t_j}^2|\Delta t_j|}I$ recovers Eq.~\ref{eq:kl_gradient}. \hfill$\square$

\textbf{From the gradient to the training loss.}
The gradient has a direct operational reading: the KL-optimal update moves the student mean $x_{t_{j+1},\theta}$ toward samples of the reward-tilted target $q^*$ with a step size set by the transition \emph{precision} $1/(\sigma_{t_j}^2|\Delta t_j|)$. Because $q^*$ is intractable, Self-OPD replaces the expectation by a Monte Carlo estimate over the $K$ SDE branches, importance-weighting each by its advantage $A_k$ and, for negative branches, the direction gate $d_k$; rewriting the mean-space target as a velocity-space target through the affine relation Eq.~\ref{eq:affine_mean} then recovers Eq.~\ref{eq:loss_unified}, up to the constant $(1+\eta^2/2)^2$ from the change of variables. The transition-variance normalization is therefore not a free hyperparameter but the precision that makes every per-step regression an unbiased estimate of the same KL gradient; a timestep-independent scale would over- or under-weight steps along the trajectory.

To aggregate the step-wise losses across the entire denoising trajectory, we define the total training objective as:
\begin{equation}
    \mathcal{L}_{\text{total}} = \sum_{j=0}^{S-1} \alpha_j \, \mathcal{L}_{\text{Self-OPD}}^{(j)},
    \label{eq:total_loss}
\end{equation}
where $\mathcal{L}_{\text{Self-OPD}}^{(j)}$ is the step-wise loss at timestep $t_j$ (Eq.~\ref{eq:loss_unified}), and $\alpha_j$ is a timestep-dependent weight. Following the intuition that early-to-mid timesteps (near noise) establish the global semantic layout while late timesteps (near data) merely refine local details, we set $\alpha_j$ to prioritize these early exploration phases, smoothly decaying its value as the trajectory approaches convergence, when $t = 0$.

\subsection{Reward-Level Fusion for Multi-Objective Alignment}
\label{sec:fusion}

Simultaneous multi-objective alignment (e.g., text, aesthetics, preference) is challenging. Previous approaches~\cite{flow_opd, diffusion_opd, dance_opd} typically blend gradients in the shared parameter space $\Theta$ by optimizing $\sum_{m} \lambda_m \mathcal{L}_m(\theta)$. However, conflicting objectives ($\langle \nabla_\theta \mathcal{L}_i, \nabla_\theta \mathcal{L}_j \rangle < 0$) trigger destructive gradient interference, causing a ``seesaw effect'' where improving one metric degrades others. Self-OPD bypasses this by fusing objectives at the non-differentiable \textit{reward level}, shifting the Pareto-front search from the parameter space to the simpler trajectory space:

Let $q_\theta(\cdot | x_{t_j})$ be the step-$j$ SDE transition kernel, and $q^*_m(\cdot | x_{t_j}) \propto q_\theta(\cdot | x_{t_j}) \exp(A_m(\cdot)/\tau)$ be the reward-tilted target for objective $m \in \{1,\dots,M\}$. Field-level fusion minimizes the weighted KL sum $\sum_m \lambda_m D_{\mathrm{KL}}(q^*_m \| q_\theta)$ in parameter space, where objectives that disagree contribute opposing gradients. Consider instead the geometric mean of the per-objective tilts, i.e., the \emph{joint} target
\begin{equation}
\begin{aligned}
    q^*_{\mathrm{joint}}(x_{t_{j+1}} \mid x_{t_j}) &\propto q_\theta(x_{t_{j+1}} \mid x_{t_j}) \prod_{m=1}^M \left( \frac{q^*_m(x_{t_{j+1}} \mid x_{t_j})}{q_\theta(x_{t_{j+1}} \mid x_{t_j})} \right)^{\lambda_m} \\
    &\propto q_\theta(x_{t_{j+1}} \mid x_{t_j}) \exp\left( \sum_{m=1}^M \frac{\lambda_m A_m(x_{t_{j+1}})}{\tau} \right).
\end{aligned}
\label{eq:joint_target}
\end{equation}
Each ratio $q^*_m/q_\theta = \exp(A_m/\tau)/Z_m$ has a normalizer $Z_m$ that is independent of $x_{t_{j+1}}$, so the product $\prod_m Z_m^{-\lambda_m}$ is a constant absorbed into $\propto$. The $M$ separate tilts therefore collapse into a \emph{single} tilt of $q_\theta$ by the composite advantage $\sum_m \lambda_m A_m$, i.e.\ a lone target rather than $M$ competing ones.

Since each $A_m$ is affine in its normalized reward (Eq.~\ref{eq:advantage}), this joint tilt is realized exactly by fusing the $M$ normalized scores into a composite reward $r^{(k)}$ for each branch $k$:
\begin{equation}
    r^{(k)} = \sum_{m=1}^{M} \lambda_m\, \tilde{r}_m^{(k)}, \qquad \tilde{r}_m^{(k)} = \frac{r_m^{(k)} - \mu_m}{\sigma_m + \epsilon},
    \label{eq:composite_reward}
\end{equation}
where $\tilde{r}_m^{(k)}$ is the z-scored metric $m$ with weight $\lambda_m$: because an affine map preserves the branch ordering, ranking branches by $\sum_m \lambda_m \tilde{r}_m^{(k)}$ induces the same positive/negative sets as tilting by $\sum_m \lambda_m A_m$. A per-scorer veto rejects branches falling significantly below the self-reference baseline.

\textbf{Selector over trajectories, not objective over parameters.}
Crucially, the composite score $r^{(k)}$ enters training \textit{only} through the branch ranking that defines the advantages $A_k$; it is never differentiated. Consequently, the training target in Eq.~\ref{eq:loss_unified} remains a \textit{single, concrete trajectory velocity} $v_\pm^{(k)}$ that already scores well across all metrics at once, rather than a sum of per-metric gradients. This is where the two paradigms diverge at convergence. Field-level fusion combines objectives \emph{before} sampling, in the shared parameter space: when two objectives disagree their gradients point in opposing directions and each update is a compromise that fully satisfies neither, which in the teacher-based instantiation is realized by routing each prompt to the teacher whose objective it matches, so the aligned model carries high quality only on the prompt family that selected that teacher. Reward-level fusion combines objectives \emph{after} sampling: the composite score ranks whole trajectories and the winning branch already lies in the region that scores well under all rewards simultaneously. The search for a joint optimum thus moves from the parameter space, where gradients interfere, to the trajectory space, where a single sample can satisfy every reward at once. This reward-level paradigm yields several practical advantages: (i) it obviates inter-objective gradient conflicts, preventing metric collapse; (ii) it accommodates black-box or non-differentiable scorers; (iii) adjusting trade-offs via $\{\lambda_m\}$ requires zero model retraining, enabling runtime composability; and (iv) it bypasses the teacher ceiling through self-exploration, allowing the student to exceed any individual pre-trained teacher.

\textbf{Per-task fusion in practice.}
We instantiate reward-level fusion through Eq.~\ref{eq:composite_reward}. Because OCR and GenEval prompts are not interchangeable, we fuse rewards \emph{per task}: on the OCR task we set $\lambda_{\rm OCR}:\lambda_{\rm PickScore}:\lambda_{\rm HPSv2}=3:1:1$, and the GenEval variant replaces the task scorer analogously, both keeping PickScore and HPSv2 as shared prompt-agnostic quality guards against reward hacking. At each training step, every SDE-branch rollout image \emph{and} the pure-ODE self-reference baseline are scored by all three reward models; each scorer is z-normalized across this joint set, and the weighted sum forms the per-branch composite score from which the advantage (Eq.~\ref{eq:advantage}) is computed to rank branches. This single teacher-free run stays jointly competitive across all metrics.

\section{Experiments}
\label{sec:experiments}
 
\subsection{Experimental Settings}
\textbf{Implementation Details.} We use SD3.5-Medium~\cite{SD3} at 512×512 resolution as the base model.
We train with LoRA~\cite{LoRA} applied to the transformer.
We use $K = 8$ branches, noise level $\eta = 0.7$, and training timesteps per step $= 2$. We use the AdamW~\cite{kingma2014adam} optimizer with learning rate $3 \times 10^{-4}$.

\textbf{Tasks and Rewards.} We evaluate three tasks with corresponding reward models: (1)~\textit{text rendering}, scored by OCR character-level accuracy between the generated and target text; (2)~\textit{compositional generation}, scored by GenEval~\cite{GenEval}; and (3)~\textit{aesthetic and human-preference alignment}, scored by PickScore~\cite{PickScore} and HPSv2~\cite{HPSv2}.
We also train mixed-reward variants fusing each task scorer.

\textbf{Baselines.} We compare against three groups. \textit{Base models}: SD3.5-Medium~\cite{SD3}. \textit{Teacher-free RL fine-tuning}: Flow-GRPO~\cite{Flow-GRPO}, GRPO-Guard~\cite{GRPO-Guard}, and DiffusionNFT~\cite{DiffusionNFT}. \textit{Teacher-based OPD}: Flow-OPD~\cite{flow_opd} and DiffusionOPD~\cite{diffusion_opd}.

\begin{table*}[t]
\centering
\caption{\textbf{Main results of single-reward training.} Best in \textbf{bold}, second best \underline{underlined} among trained methods.}
\label{tab:main}
\begin{tabular}{llccccc}
\toprule
\multicolumn{2}{l}{\multirow{2}{*}{\textbf{Method}}} & \multicolumn{2}{c}{\textbf{GenEval}} & \multirow{2}{*}{\textbf{OCR}} &\multirow{2}{*}{\textbf{PickScore}} & \multirow{2}{*}{HPSv2}\\
\cmidrule(lr){3-4}
 & & strict & cont. & & & \\
\midrule
\multicolumn{7}{l}{\textit{Base models (no alignment fine-tuning)}} \\
\multicolumn{2}{l}{SD3.5-Medium (base)} & 0.5222 & 0.6219 & 0.5833 & 22.41 & 0.3004 \\
\midrule
\multicolumn{7}{l}{\textit{RL fine-tuning (teacher-free)}} \\
\multirow{3}{*}{Flow-GRPO} & + RM GenEval & 0.9005 & 0.9470 & 0.6519 & 22.53 & 0.2792 \\
 & + RM OCR & 0.5420 & 0.6529 & 0.9253 & 22.51 & 0.2999 \\
 & + RM PickScore & 0.5081 & 0.5293 & 0.7033 & 23.57 & 0.3396 \\
\cmidrule(lr){1-7}
\multirow{3}{*}{GRPO-Guard} & + RM GenEval & \underline{0.9155} & \underline{0.9502} & 0.7032 & 22.20 & 0.2586 \\
 & + RM OCR & 0.5402 & 0.6870 & \underline{0.9348} & 22.44 & 0.2912 \\
 & + RM PickScore & 0.4602 & 0.4037 & 0.6563 & \underline{23.98} & \underline{0.3453} \\
\midrule
\multicolumn{7}{l}{\textit{Ours (teacher-free)}} \\
\multirow{4}{*}{\textbf{Self-OPD}} & + RM GenEval & \textbf{0.9536} & \textbf{0.9676} & 0.6120 & 22.46 & 0.2741 \\
 & + RM OCR & 0.4186 & 0.5565 & \textbf{0.9745} & 22.11 & 0.2719 \\
 & + RM PickScore & 0.5303 & 0.5863 & 0.7416 & \textbf{24.47} & 0.3357 \\
 & + RM HPSv2 & 0.4453 & 0.4722 & 0.7121 & 23.25 & \textbf{0.4099} \\
\bottomrule
\end{tabular}%
\end{table*}

\begin{table*}[t]
\centering
\caption{\textbf{Main results of mixed-reward training.} Each method produces a single model evaluated across all metrics. }
\label{tab:main_mix}
\resizebox{\textwidth}{!}{%
\begin{tabular}{llcccccccc}
\toprule
\multicolumn{2}{l}{\multirow{2}{*}{\textbf{Method}}} & \multirow{2}{*}{\textbf{Teacher-free}} & \multicolumn{2}{c}{\textbf{GenEval}} & \multirow{2}{*}{\textbf{OCR }} & \multicolumn{2}{c}{\textbf{Pref. (same test images)}} & \multicolumn{2}{c}{\textbf{Pref. (separate test set)}} \\
\cmidrule(lr){4-5} \cmidrule(lr){7-8} \cmidrule(lr){9-10}
 & & & strict & cont. & & PickScore & HPSv2 & PickScore & HPSv2 \\
\midrule
\multicolumn{10}{l}{\textit{Base models (no alignment fine-tuning)}} \\
\multicolumn{2}{l}{SD3.5-Medium (base)} & --- & 0.5222 & 0.6219 & 0.5833 & 22.66 & 0.2824 & 22.41 & 0.3004 \\
\midrule
\multicolumn{10}{l}{\textit{Teacher-based OPD}} \\
\multicolumn{2}{l}{Flow-OPD} & \xmark & 0.8594 & 0.9203 & 0.9392 & 23.43 & 0.3042 & 23.13 & 0.3334 \\
\multicolumn{2}{l}{DiffusionOPD} & \xmark & \underline{0.9150} & \underline{0.9479} & \underline{0.9464} & 22.72 & 0.2676 & \textbf{23.95} & \textbf{0.3729} \\
\midrule
\multicolumn{10}{l}{\textit{Mixed reward (teacher-free)}} \\
\multicolumn{2}{l}{DiffusionNFT} & \cmark & 0.8888 & 0.9277 & 0.9229 & \underline{23.67} & \underline{0.3206} & 23.18 & \underline{0.3482} \\
\multicolumn{2}{l}{\textbf{Self-OPD}(Ours)} & \cmark & \textbf{0.9521} & \textbf{0.9691} & \textbf{0.9597} & \textbf{23.87} & \textbf{0.3214} & \underline{23.39} & 0.3415 \\
\bottomrule
\end{tabular}%
}
\end{table*}

\subsection{Main Results}
\textbf{Quantitative Results.}
We evaluate all methods across two distinct setups, namely single-reward training in Tab.~\ref{tab:main} and mixed-reward, multi-capability training in Tab.~\ref{tab:main_mix}. The assessment encompasses compositional generation via GenEval, text rendering via OCR, and human/AI preferences measured by PickScore and HPSv2.
For GenEval, we report both the strict mode, which binary-scores generations only when all sub-requirements are met, and the continuous mode, which allows for partial credit.
Similarly, preference metrics are assessed under two protocols: the same test images protocol that evaluates directly on the GenEval and OCR benchmarks, and the separate test set protocol that utilizes a held-out aesthetic benchmark.

Self-OPD emerges as the sole teacher-free method that remains highly competitive across all evaluation dimensions simultaneously.
In Tab.~\ref{tab:main}, among all single-reward methods, Self-OPD achieves top performance with a GenEval score of 0.95, OCR accuracy of 97.5\%, PickScore of 24.79, and HPSv2 score of 0.3665. Notably, Self-OPD consistently outperforms both Flow-GRPO and GRPO-Guard on every task despite utilizing identical base models and reward signals, validating the superiority of step-wise self-distillation over terminal policy gradient optimization.

\begin{figure*}[!t]
    \centering
    \includegraphics[width=\textwidth]{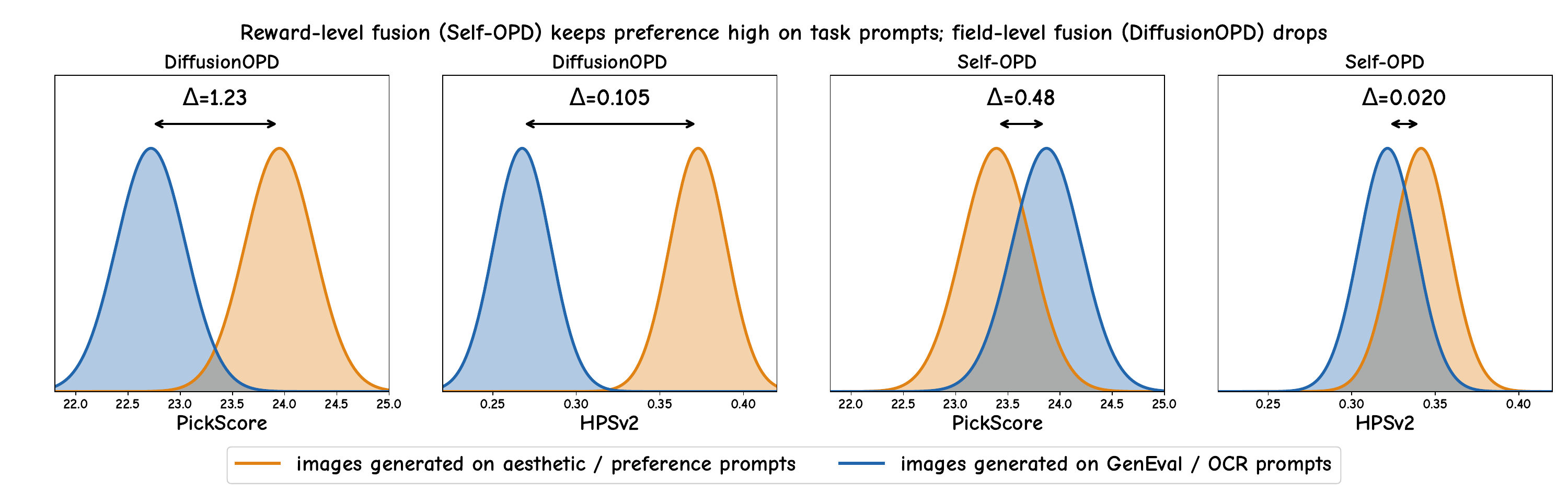}
    \caption{\textbf{Field-level fusion couples preference quality to the prompt family; reward-level fusion does not.} For each approach, we contrast the preference-reward distribution of images generated on aesthetic/preference prompts (orange) against that of images generated on the GenEval/OCR task prompts (blue). The distribution peaks are placed at the measured means from Tab.~\ref{tab:main_mix} (the ``separate test set'' and ``same test images'' columns, respectively); the spread is illustrative and only serves to visualize the shift $\Delta$ between the two prompt families. For DiffusionOPD, the preference distribution shifts sharply when moving to the task prompts, whereas for Self-OPD, the two distributions nearly coincide.}
    \label{fig:decouple_dist}
\end{figure*}

In Tab.~\ref{tab:main_mix}, the unified mixed-reward Self-OPD model shares the highest GenEval score of 0.95, delivers the best OCR accuracy of 96.0\%, and under the same test images protocol, outperforms the teacher-based DiffusionOPD on both PickScore, with a score of 23.87 compared to DiffusionOPD's 22.72, and HPSv2, with 0.3214 compared to 0.2676, despite requiring no teacher supervision. This performance gap empirically validates the reward-level versus field-level fusion paradigm.
Specifically, the core objective of multi-metric alignment requires each individual generation to satisfy all criteria simultaneously, rather than having different subsets of images excel under different evaluators. By selecting candidate SDE branches within the joint high-reward region, Self-OPD ensures that the same GenEval and OCR test images also exhibit superior aesthetic quality. Conversely, field-level fusion in DiffusionOPD routes each prompt to a specialized teacher, generating images that satisfy the isolated target objective but fail to jointly optimize general preferences. Consequently, its preference evaluations on the GenEval and OCR test images, yielding a PickScore of 22.72 and HPSv2 of 0.2676, fall below those of Flow-OPD, which scores 23.43 and 0.3042, respectively.

Fig.~\ref{fig:decouple_dist} makes this difference explicit. Reading each method as a distribution of preference rewards over two prompt populations, DiffusionOPD's PickScore distribution shifts down by $\Delta{=}1.23$ and its HPSv2 distribution by $\Delta{=}0.105$ when the images are generated on the GenEval/OCR task prompts rather than on aesthetic prompts. This is exactly the signature of field-level fusion described above: blending per-teacher gradients in parameter space yields images that score well only on the prompt family matched to the selected teacher, so preference quality is effectively decoupled from the task prompts. For Self-OPD the two distributions almost overlap ($\Delta{=}0.48$ for PickScore and $\Delta{=}0.020$ for HPSv2), because reward-level fusion selects trajectories inside the joint high-reward region rather than compromising between competing gradients. This is precisely why we treat the same test images protocol as the primary preference comparison: it measures whether the \emph{same} generations that satisfy the task also carry high aesthetic quality, the property multi-objective alignment actually targets.

\begin{figure*}[t]
    \centering
    \includegraphics[width=\textwidth]{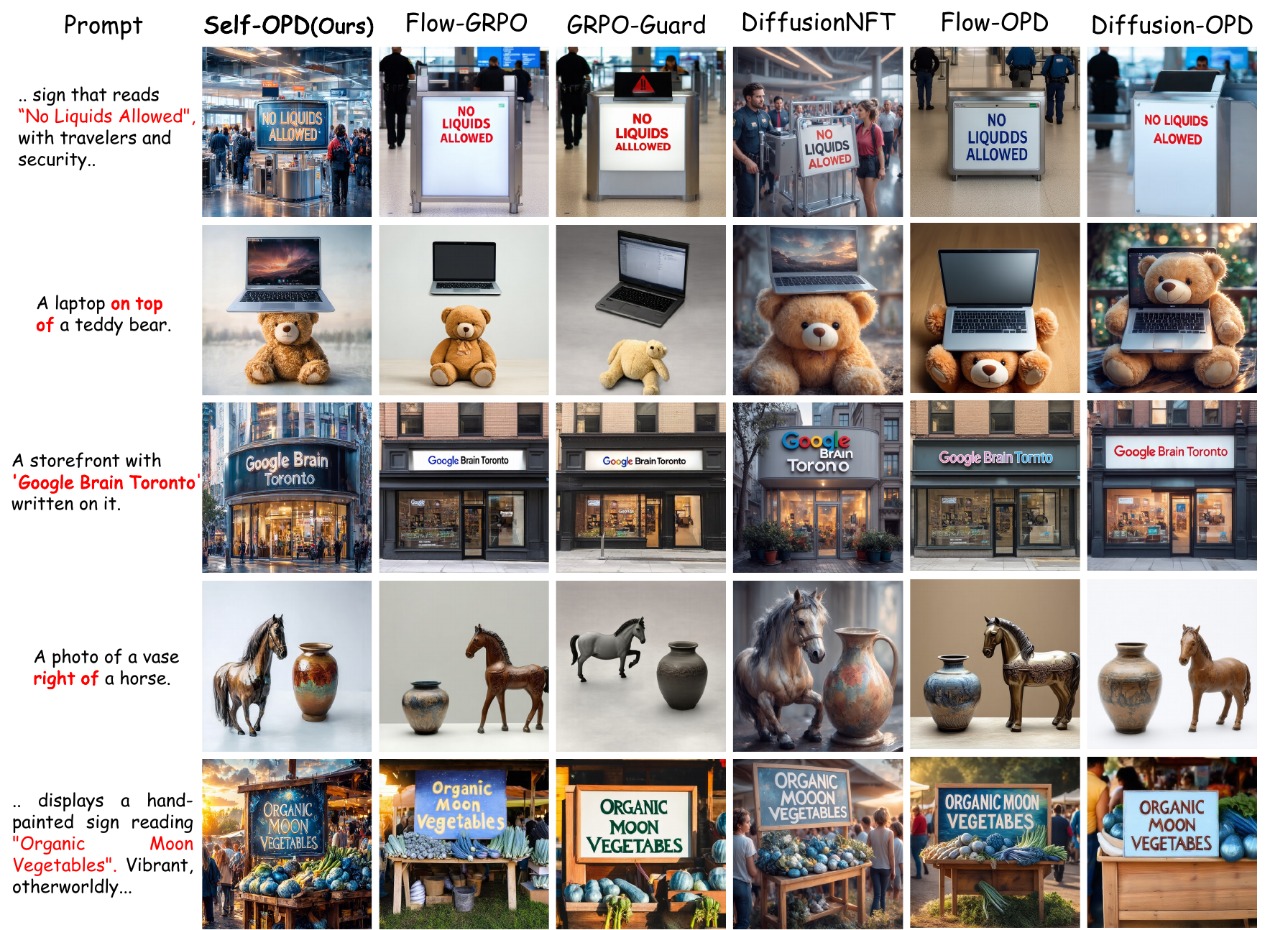}
    \caption{\textbf{Qualitative comparison.} Each method uses its mixed-reward model if available, or the prompt-specific reward model (e.g., OCR for text prompts). Self-OPD achieves superior performance in both accuracy and aesthetics.}
    \label{fig:comparison}
\end{figure*}

\textbf{Qualitative Results.} Fig.~\ref{fig:comparison} qualitatively compares all methods on six prompts spanning text rendering, spatial relations, and counting. First, among mixed-reward baselines including DiffusionNFT, Flow-OPD, and DiffusionOPD, Self-OPD renders text with higher fidelity, precisely spelling challenging phrases such as ``No Liquids Allowed'' and ``Google Brain Toronto''. It also accurately executes complex spatial layouts, such as placing a laptop on a teddy bear or a vase right of a horse, and strictly satisfies counting constraints like generating exactly four benches, yielding more coherent overall compositions.
Second, compared to single-reward specialists like Flow-GRPO and GRPO-Guard at their optimal checkpoints. Self-OPD exhibits superior control across all three axes. Driven by joint preference optimization, it concurrently enhances scene richness and aesthetic quality. These qualitative advantages closely align with the quantitative improvements in Tab.~\ref{tab:main} and Tab.~\ref{tab:main_mix}.

\FloatBarrier
\subsection{Ablation Study}
\label{sec:ablation}

\textbf{Effect of Branch Selection Strategy.}
Regressing solely to the top-performing branch (\textit{Best-of-$K$}, red) yields unstable, non-monotonic training that barely surpasses the baseline. This failure stems from high gradient variance caused by abrupt target switching and the complete absence of negative feedback. Conversely, our \textit{all-branch advantage-weighted} distillation (green) stabilizes training by utilizing the entire exploration neighborhood. Adding $|\Delta t|$-aligned KL normalization (blue; \textit{full Self-OPD}, Proposition~\ref{prop:kl}) further accelerates convergence and maximizes performance, validating our theoretical step-wise loss scaling.

\textbf{Effect of Repulsion Bounding.}
The scaling factor $\tfrac{1}{2}$ in Eq.~\ref{eq:direction} bounds the coefficient at $d_k \in [0,1]$, ensuring the repulsive push never overpowers the attractive pull. Under an unbounded formulation $d_k \in [0,2]$ (red), strong repulsive gradients from opposing branches eventually dominate, destabilizing training and triggering a sharp performance collapse. Our bounded gating (blue) maintains stable, monotonic improvement, confirming that the direction-aware coefficient must act purely as an attenuation gate rather than a gradient amplifier.

\textbf{Effect of Timestep Weighting.}
While uniform timestep weighting eventually matches our weighted scheme in final performance, our proposed weighting accelerates convergence by concentrating training signals on early timesteps, where branching yields more diverse semantic alternatives. Conversely, a timestep importance sampling (TIS) variant that oversamples early steps degrades performance below the base model, demonstrating that continuous exposure to all trajectory stages is essential even when loss weighting prioritizes early exploration phases.

\begin{figure}[!t]
    \centering
    \begin{minipage}[t]{0.45\linewidth}
        \centering
        \includegraphics[width=\linewidth]{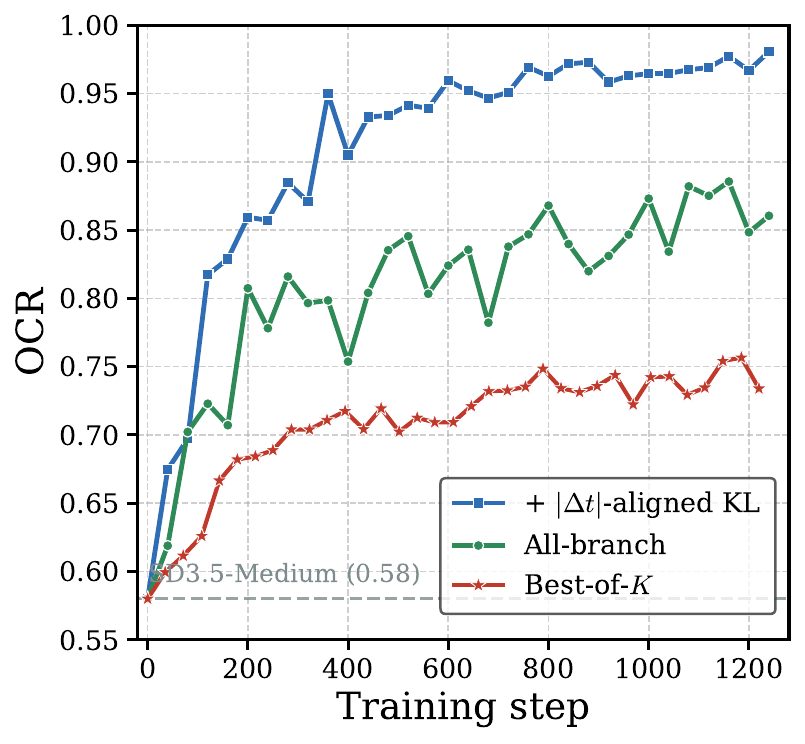}
        {\small (a) Branch selection strategy.}
        \label{fig:ablation_strategy}
    \end{minipage}
    \hfill
    \begin{minipage}[t]{0.45\linewidth}
        \centering
        \includegraphics[width=\linewidth]{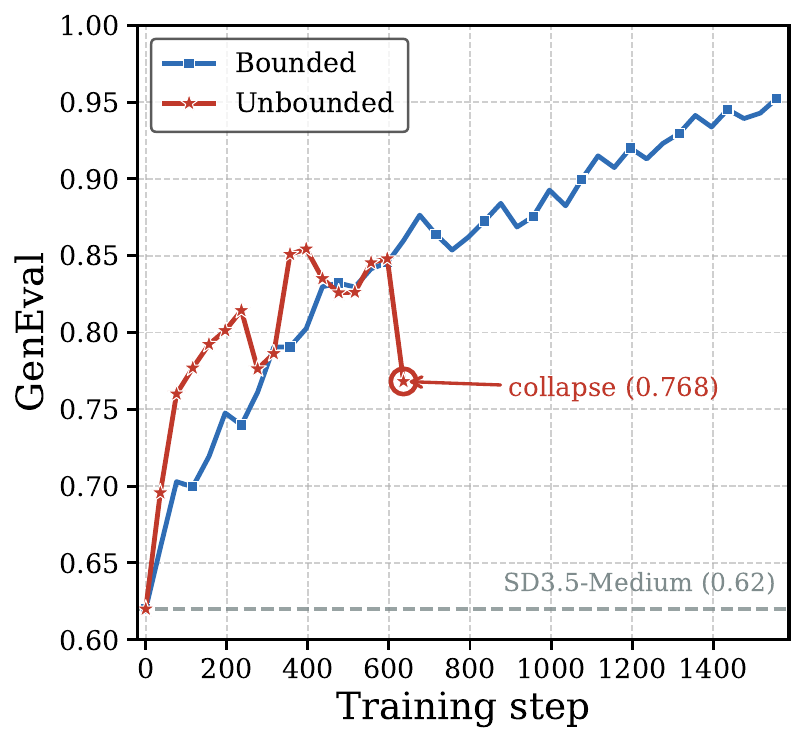}
        {\small (b) Bounded vs.\ unbounded.}
        \label{fig:ablation_repulsion}
    \end{minipage}    
    \caption{\textbf{Ablation of core designs.}
    \textbf{(a) Branch handling:} Best-of-$K$ (red), all-branch pull-push distillation (green), and Self-OPD with $|\Delta t|$-aligned KL (blue).
    \textbf{(b) Gradient gating:} Bounded $d_k$ (blue) vs.\ unbounded $d_k$ (red).}
    \label{fig:ablation}
\end{figure}

\begin{figure}[!t]
    \centering
    \includegraphics[width=0.49\textwidth]{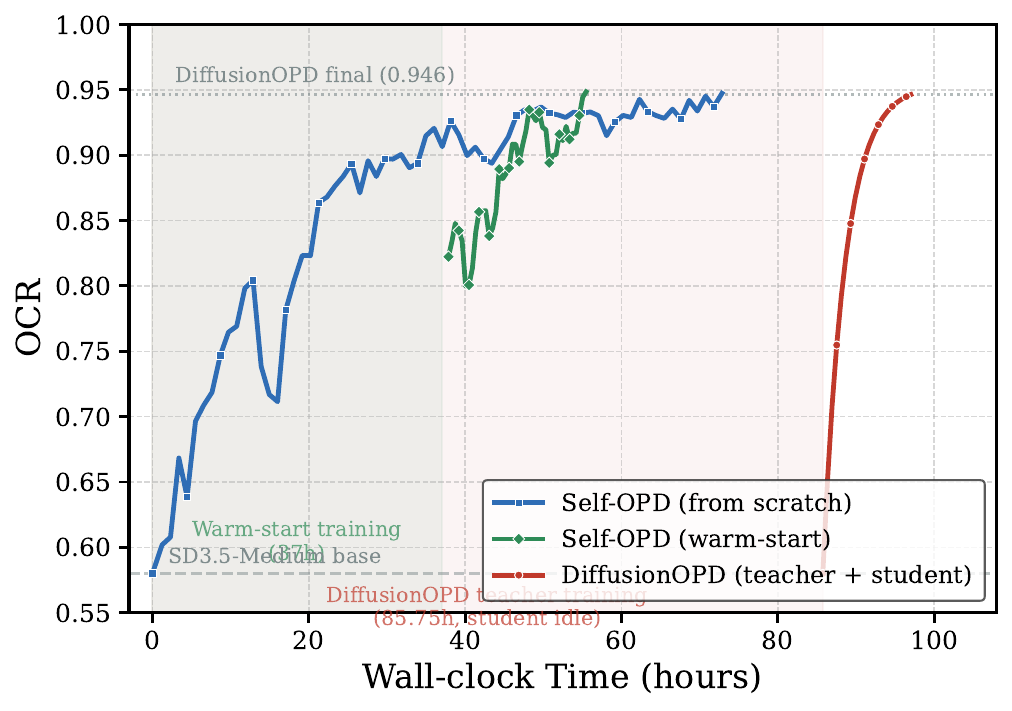}\hfill
    \includegraphics[width=0.49\textwidth]{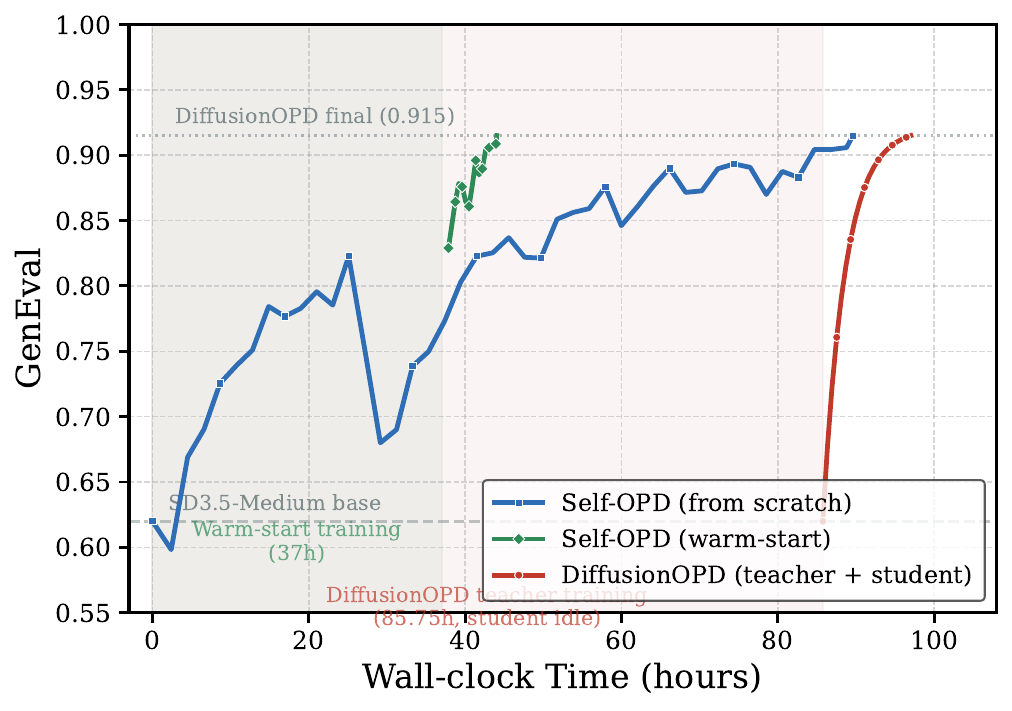}
    \caption{\textbf{Training efficiency: OCR (left) and GenEval (right) vs.\ wall-clock time.} Blue: Self-OPD mix from scratch. Green: Self-OPD mix warm-start. Red: DiffusionOPD student phase. The green shaded region indicates the time spent training warm-start checkpoints (37\,h). The red shaded region indicates DiffusionOPD's teacher training phase (85.75\,h), during which the student model is idle. The dotted horizontal line marks DiffusionOPD's final performance level. All training times for DiffusionOPD are taken from the original report.
    }
    \label{fig:training_efficiency}
\end{figure}

\textbf{Analysis of Training Time.} Fig.~\ref{fig:training_efficiency} makes Self-OPD's wall-clock advantage concrete on OCR and GenEval.
DiffusionOPD~\citep{diffusion_opd} requires three separate teacher models (GenEval: 46.9\,h, OCR: 33.2\,h, Aesthetics: 85.8\,h) trained in parallel, followed by 11.3\,h of student distillation, for a total wall-clock time of $85.8 + 11.3 = 97.0$\,h.
During the entire teacher-training phase (red shaded region), the student remains at the base level and cannot begin learning.
Self-OPD eliminates this bottleneck entirely, in two configurations:

\begin{itemize}
    \item \textit{From scratch} (blue): a single tri-reward run that begins improving immediately. It reaches DiffusionOPD-level OCR (0.946) in $\sim$62\,h and GenEval (0.915) in $\sim$90\,h, both strictly faster than DiffusionOPD's 97\,h.
    \item \textit{Warm-start} (green): first trains single-reward specialists in parallel (GenEval: 27\,h, OCR: 37\,h; wall-clock: 37\,h, green shaded region), then launches the tri-reward mixed run which converges to 0.946 OCR in $\sim$11\,h and 0.915 GenEval in $\sim$7\,h, for a total wall-clock of $\sim$48\,h and $\sim$44\,h respectively. This is roughly $2\times$ faster than DiffusionOPD on both metrics, while achieving higher final performance.
\end{itemize}

\section{Conclusion}
In this paper, we introduce Self-OPD, a teacher-free on-policy distillation framework for flow matching models.
By branching the student's own trajectory, scoring each branch with task rewards, and applying all-branch advantage-weighted regression with direction-aware modulation, Self-OPD turns self-exploration into dense per-step supervision.
Experiments across text rendering, compositional generation, and preference alignment show that this formulation can match or surpass teacher-based OPD while avoiding task-specific teacher training and enabling reward-level multi-objective fusion.

\clearpage
\bibliographystyle{plainnat}
\setlength{\bibhang}{0pt}
\setlength\bibindent{0pt}
\bibliography{main}

\clearpage

\appendix
\setcounter{secnumdepth}{2}

This supplementary material provides additional evaluation details, a theoretical proof, extended analysis, and qualitative results referenced in the main paper. The contents are organized as follows:

\begin{itemize}
\item Appendix~\ref{app:metrics} documents the evaluation metrics, including the two GenEval scoring modes and the two preference evaluation protocols.
\item Appendix~\ref{app:qualitative} gives additional qualitative results demonstrating the aesthetic advantage of Self-OPD on the task-specific prompts.
\end{itemize}

\section{Evaluation Metrics}
\label{app:metrics}

\subsection{GenEval: Strict vs.\ Continuous Scoring}
\label{app:geneval}

GenEval~\cite{GenEval} evaluates compositional generation by checking whether generated images satisfy a set of sub-requirements (e.g., correct object count, spatial relation, color attribute). We report two scoring modes:

\begin{itemize}
    \item \textbf{Strict scoring.} A binary per-image metric: the score is 1 only if \emph{all} sub-requirements for that prompt are satisfied, and 0 otherwise. The reported number is the average over all test prompts. This is a demanding metric that penalizes any partial failure.
    \item \textbf{Continuous scoring.} A partial-credit metric: for each image, the score is the fraction of sub-requirements satisfied (ranging from 0 to 1). The reported number is again averaged over all test prompts. This gives credit to images that partially fulfill the prompt.
\end{itemize}

\begin{figure}[t]
    \centering
    \includegraphics[width=0.5\linewidth]{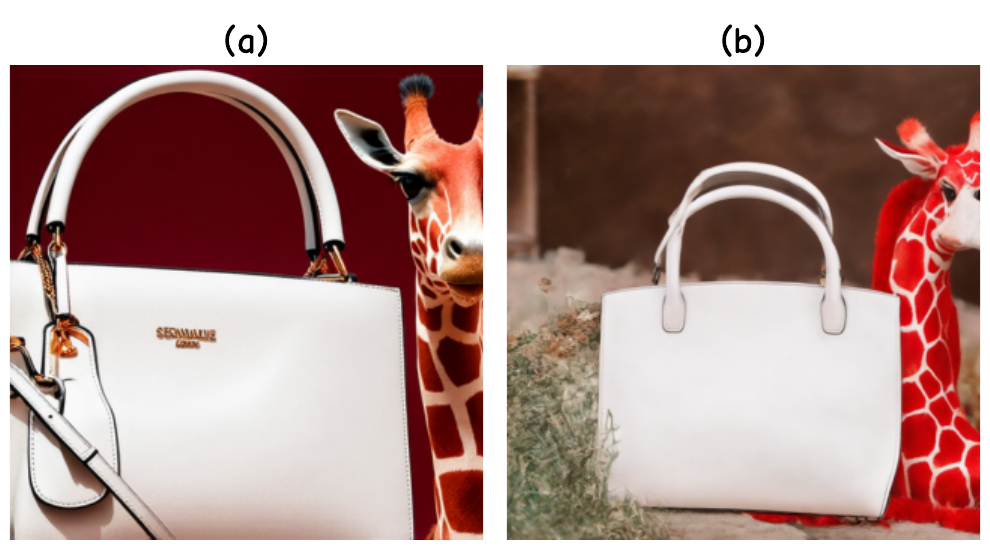}
    \caption{\textbf{Strict vs.\ continuous scoring} on ``a photo of a white handbag and a red giraffe''. (a) base; (b) Self-OPD.}
    \label{fig:strict_cont}
    \vspace{1em}
    \centering
    \begin{tabular}{lcc}
        \toprule
        Model & Strict & Continuous \\
        \midrule
        SD3.5-Medium (base) & 0 & 0.5 \\
        Self-OPD & 1 & 1.0 \\
        \bottomrule
    \end{tabular}
    \captionof{table}{Strict and continuous GenEval scores for the two images in Fig.~\ref{fig:strict_cont}. The prompt has two sub-requirements (one per colored object).}
    \label{tab:strict_cont}
\end{figure}

Fig.~\ref{fig:strict_cont} illustrates both modes on the color-attribution prompt ``a photo of a white handbag and a red giraffe'', which has two sub-requirements, one per colored object. The SD3.5-Medium base (Fig.~\ref{fig:strict_cont}a) renders the white handbag correctly but colors the giraffe orange rather than red, satisfying one of the two sub-requirements: its strict score is $0$ while its continuous score is $0.5$. Self-OPD (Fig.~\ref{fig:strict_cont}b) renders both the white handbag and the red giraffe, satisfying both sub-requirements for a strict score of $1$ and a continuous score of $1.0$ (Tab.~\ref{tab:strict_cont}). Continuous scoring better captures incremental improvements, while strict scoring reflects end-to-end correctness.

\subsection{Preference Protocols: Same Test Images vs.\ Separate Test Set}
\label{app:preference}

We evaluate preference-based metrics (PickScore~\cite{PickScore} and HPSv2~\cite{HPSv2}) under two complementary protocols:

\begin{itemize}
    \item \textbf{Same test images.} PickScore and HPSv2 are computed on the \emph{same} images generated for the GenEval and OCR test prompts. Specifically, for each model we generate images for the GenEval test suite and the OCR test suite, then score all these images with PickScore and HPSv2. The final score is the unweighted average of the GenEval-prompt score and the OCR-prompt score (1:1 averaging). This protocol measures whether a model maintains aesthetic quality and human preference \emph{on the task it was trained for}, without introducing any distribution shift from additional prompts.
    \item \textbf{Separate test set.} PickScore and HPSv2 are computed on a held-out set of aesthetic-oriented prompts (DrawBench) that are disjoint from the training and GenEval/OCR evaluation prompts. This measures preference quality on a general-purpose prompt distribution.
\end{itemize}

The ``same test images'' protocol gives a more controlled cross-metric comparison because it evaluates all metrics on the same set of generations, separating model quality from prompt distribution. A model with high preference scores under this protocol improves aesthetic quality without sacrificing task performance. The ``separate test set'' protocol can instead favor models that overfit to aesthetic-leaning prompts while degrading on structured tasks. For image generation we care about the overall quality of every generated image rather than separate high scores on different task sets, so we treat the ``same test images'' protocol as the primary preference comparison. Fig.~\ref{fig:decouple_dist} in Appendix~\ref{app:qualitative} makes this concrete: DiffusionOPD's lead on the separate set does not carry over to the GenEval and OCR test images, because its field-level fusion couples preference quality to the prompt family, whereas Self-OPD stays high on both.

\begin{figure}[!t]
    \centering
    \includegraphics[width=\textwidth]{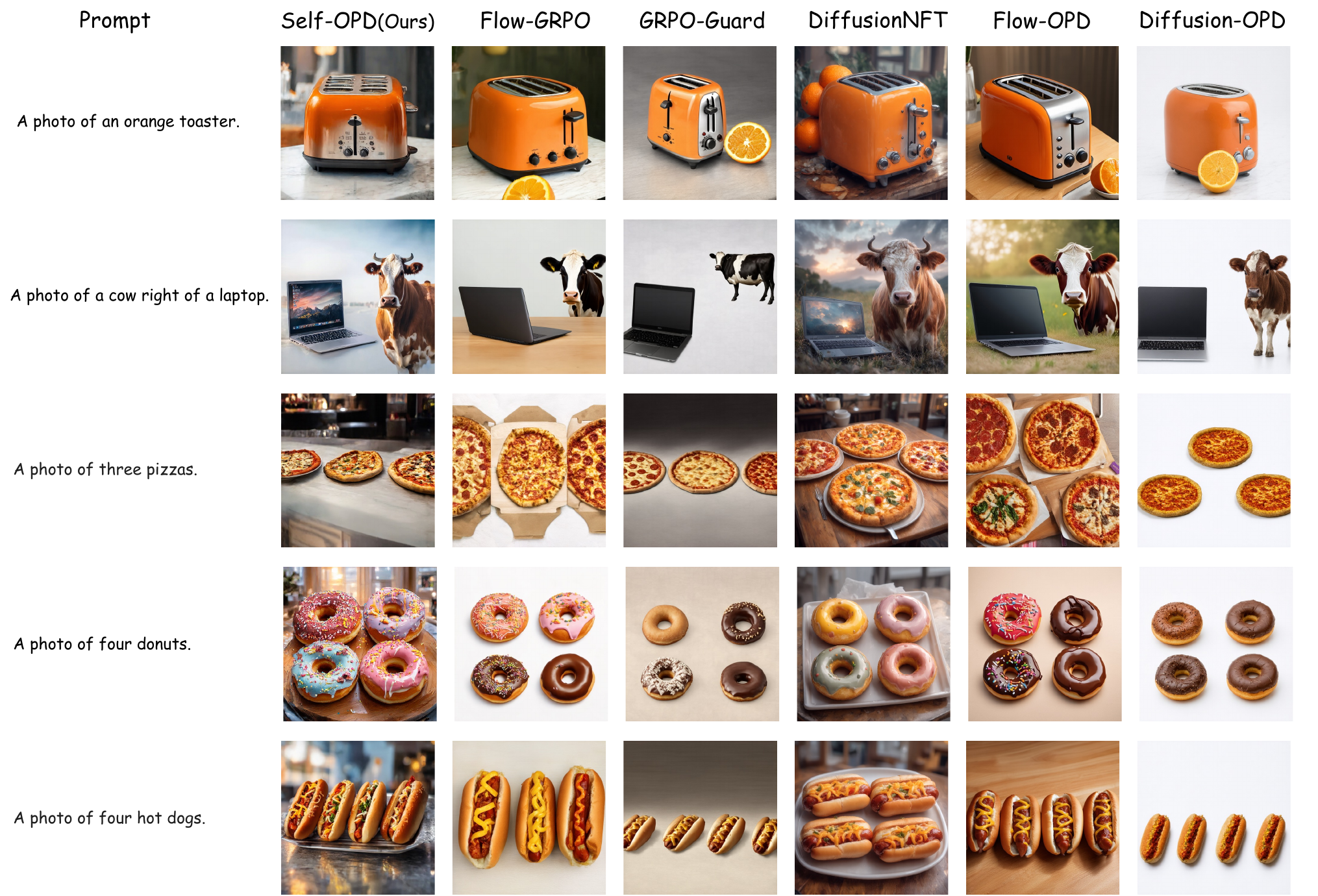}
    \caption{\textbf{Aesthetic quality on GenEval prompts.} Each method uses its mixed-reward model if available, or its GenEval-trained model otherwise (Flow-GRPO, GRPO-Guard). Self-OPD satisfies the compositional requirements (counting, spatial relations, attributes) while generating images with higher aesthetic quality, consistent with its higher PickScore/HPSv2 in Tab.~\ref{tab:main_mix}.}
    \label{fig:aesthetic_geneval}
\end{figure}

\begin{figure}[!t]
    \centering
    \includegraphics[width=\linewidth]{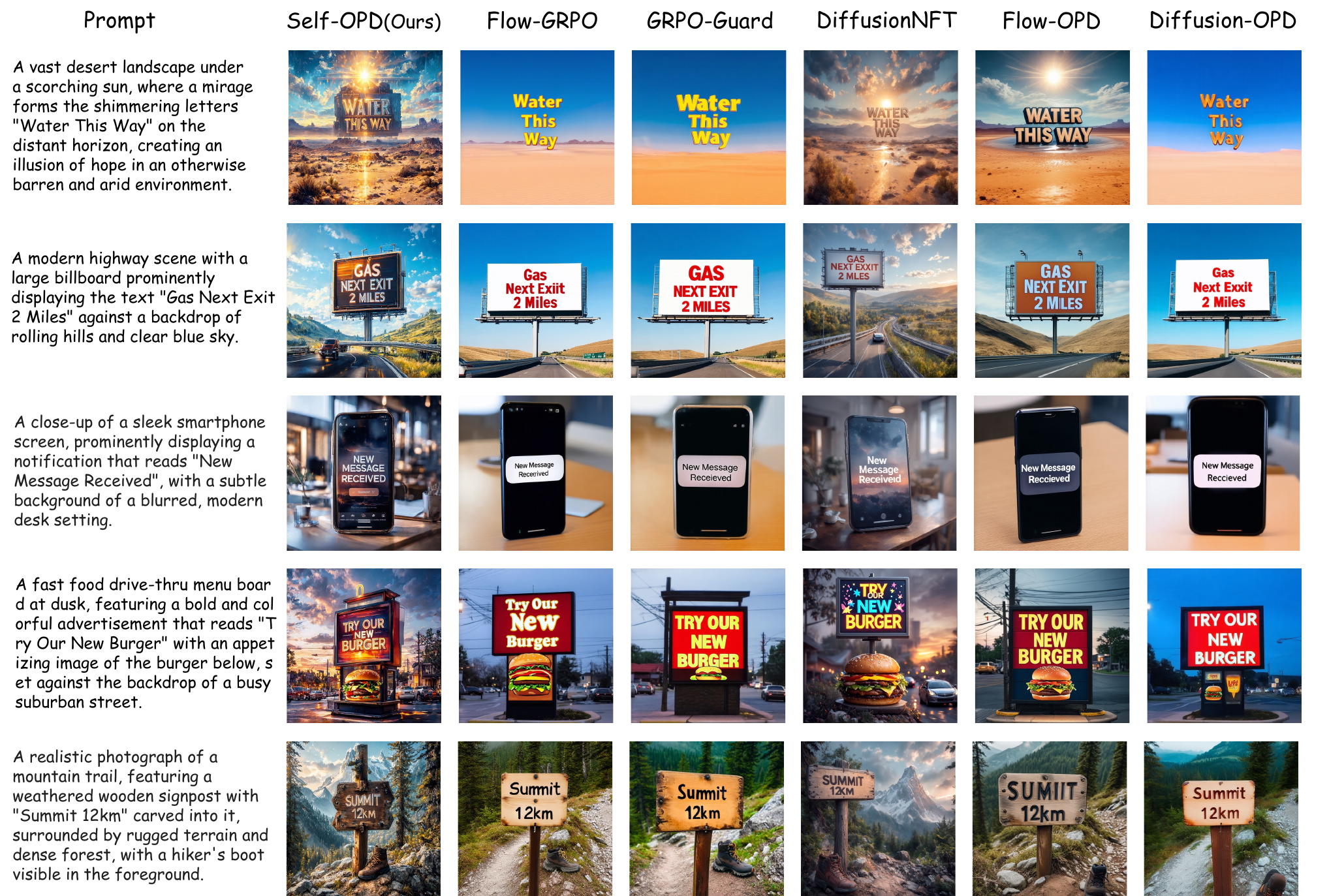}
    \captionof{figure}{\textbf{Aesthetic quality on OCR prompts.} Each method uses its mixed-reward model if available, or its OCR-trained model otherwise (Flow-GRPO, GRPO-Guard). Self-OPD renders the target text accurately while producing more visually appealing scenes with richer backgrounds and realistic details.}
    \label{fig:aesthetic_ocr}
\end{figure}

\section{Additional Qualitative Results}
\label{app:qualitative}

Fig.~\ref{fig:aesthetic_geneval} and Fig.~\ref{fig:aesthetic_ocr} visualize the aesthetic quality gap on task-specific prompts, comparing Self-OPD against Flow-GRPO, GRPO-Guard, DiffusionNFT, Flow-OPD, and DiffusionOPD. Under the same test images protocol, Self-OPD attains the highest PickScore and HPSv2 among all methods (Tab.~\ref{tab:main_mix}); these figures illustrate why. On GenEval prompts (Fig.~\ref{fig:aesthetic_geneval}), Self-OPD produces images with richer lighting, more natural textures, and better composition while satisfying the compositional requirements.

As shown in Fig.~\ref{fig:aesthetic_ocr}, Self-OPD renders text accurately and simultaneously generates more visually appealing scenes with coherent backgrounds and realistic details, whereas other methods tend to produce flatter or overly simplistic images.
Given identical prompts, Self-OPD maintains
high aesthetic quality, while DiffusionOPD and the RL baselines exhibit visible degradation, e.g., over-saturation and loss of detail.

\end{document}